\documentclass[letterpaper, 10 pt, conference]{ieeeconf}
\IEEEoverridecommandlockouts
\usepackage{amsmath,amssymb,amsfonts}
\usepackage{algorithmic}
\usepackage{graphicx}
\usepackage{textcomp}
\usepackage{xcolor}
\usepackage{macros}
\usepackage{glossaries}
\usepackage{xspace}

\usepackage{amsthm}
\usepackage{dsfont}
\usepackage{bm}
\usepackage{booktabs}
\usepackage[colorlinks=false]{hyperref}
\usepackage[capitalize]{cleveref}

\newtheoremstyle{boldtitle}% name of the style to be used
  {\topsep}% measure of space to leave above the theorem. E.g.: 3pt
  {\topsep}% measure of space to leave below the theorem. E.g.: 3pt
  {\itshape}% name of font to use in the body of the theorem
  {0pt}% measure of space to indent
  {\bfseries}% name of head font
  {.}% punctuation between head and body
  {5pt plus 1pt minus 1pt}% space after theorem head; " " = normal interword space
  {\thmname{#1}\thmnumber{ #2}\thmnote{ (\textbf{#3})}}% Manually bold the note (#3)

\newtheorem{proposition}{Proposition}

\newcommand{\reach}{\textsf{Reach}}
\newcommand{\retreat}{\textsf{Retreat}}
\newcommand{\slideLeft}{\textsf{Slide Left}}
\newcommand{\slideRight}{\textsf{Slide Right}}
\newcommand{\press}{\textsf{Press}}
\newcommand{\circular}{\textsf{Circle}}
\newcommand{\wipe}{\textsf{Wipe}}

\begin{document}

\title{\LARGE \bf
Imitation Learning via Discrete and Rhythmic Primitive Composition
}

\author{Rafid Saif$^{1, 2}$, Auke Ijspeert$^{3}$, and Hsiu-Chin Lin$^{1, 2}$%
\thanks{This work has been submitted to the IEEE for possible publication. Copyright may be transferred without notice, after which this version may no longer be accessible.}
\thanks{We acknowledge the support of the Natural Sciences and Engineering Research Council of Canada (NSERC).}% <-this % stops a space
\thanks{$^{1, 2}$McGill University, and Mila - Quebec AI Institute, Canada. }
\thanks{$^{3}$\'Ecole Polytechnique F\'ed\'erale de Lausanne (EPFL), Switzerland. }%
\thanks{Correspondence: \tt\small rafid.saif@mail.mcgill.ca}
}

\maketitle
\thispagestyle{empty}
\pagestyle{empty}

\begin{abstract}
Long-horizon robotic manipulation is inherently composed of semantically meaningful sub-tasks. While Movement Primitives (MPs) can model these components, their composition typically relies on manual engineering or sample-inefficient reinforcement learning. Existing supervised methods are limited by the lack of MP labels in standard datasets and often fail to generalize across discrete and rhythmic movements. We propose Neural Dynamic Composition (NDC), a supervised framework to compose Dynamic Movement Primitives (DMPs) using only raw trajectories for supervision, given a single example of each primitive. Our approach exploits parallel combinations of DMPs as a differentiable relaxation of categorical primitive selection, and utilizes a warp-invariant loss formulation to identify primitives despite spatial or temporal variations. A staged training curriculum is employed to learn the selection policy and DMP parameters, unifying discrete and rhythmic movements in one framework. Evaluated on 2D handwriting and 3D robotic tasks, our method outperforms baselines on most tasks and metrics. An ablation study demonstrates how each design choice contributes to the system's performance.
\end{abstract}

\keywords
Imitation Learning, Learning from Demonstration, Deep Learning Methods
\endkeywords

\newacronym{dmp}{DMP}{Dynamic Movement Primitive}
\newacronym{mp}{MP}{Movement Primitive}
\newacronym{ic}{IC}{initial condition}
\newacronym{dtw}{DTW}{Dynamic Time Warping}
\newacronym{sdtw}{SDTW}{Soft Dynamic Time Warping}
\newacronym{ndmp}{NDMP}{Neural Dynamic Movement Primitive}
\newacronym{mse}{MSE}{mean squared error}
\newacronym{rl}{RL}{reinforcement learning}
\newacronym{il}{IL}{imitation learning}
\newacronym{ml}{ML}{machine learning}
\newacronym{bc}{BC}{behavior cloning}
\newacronym{gcbc}{GCBC}{goal-conditioned behavior cloning}
\newacronym{rqa}{RQA}{recurrence quantification analysis}
\newacronym{lfd}{LfD}{learning from demonstrations}
\newacronym{ood}{OOD}{out-of-distribution}
\newacronym{mlp}{MLP}{multi-layer perceptron}
\newglossaryentry{ndc}{
  name={NDC},
  description={Neural Dynamic Composition}
}
\newacronym{ldlj}{LDLJ}{Log Dimensionless Jerk}
\newacronym{sparc}{SPARC}{Spectral Arc Length}

\section{Introduction}
\noindent
When humans learn to replicate a task from a demonstration, we typically reason in terms of meaningful sub-tasks, or \emph{skills}. For example, when observing someone making a cup of tea, we naturally segment the task into a sequence of distinct skills, such as \emph{reaching} for the pot, \emph{lifting} the lid, or \emph{stirring}. Through this segmentation, we also implicitly infer which skill is \emph{discrete} (one-off) or \emph{rhythmic} (cyclic repetitions). We can replicate the task easily by recomposing these skills with our understanding of each skill’s objective.

\begin{figure}[t!]
    \centering
    \includegraphics[width=\linewidth]{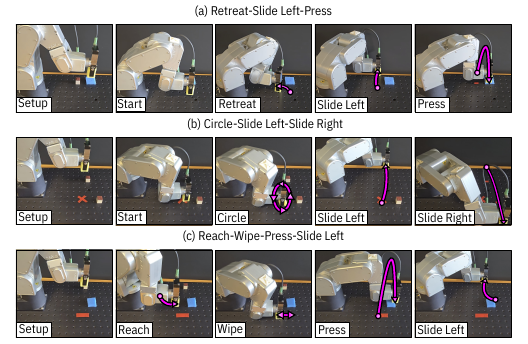}
    \caption{{Examples of sequential tasks with a mixture of discrete and rhythmic movements.}}
    \label{fig:ndc_robot_rollouts}
\end{figure}

Skill-based learning brings this task-aware action execution to robotics. However, existing approaches face a major hurdle, requiring manual skill annotations \cite{Sun2025ARCHHierarchicalHybrida, Jaquier2022LearningSequenceBlend} or resorting to \gls{rl} \cite{Chen2024SCaRRefiningSkill, Shankar2020DiscoveringMotorPrograms, Nasiriany2022AugmentingReinforcementLearning} since explicit skill selection is not differentiable. The first approach requires laborious annotation of task demonstrations; the latter is sample inefficient. Differentiable, unsupervised alternatives exist \cite{Tanneberg2021SKIDRAWSkill}, but these rely on latent representations of skills, forgoing the interpretability of pre-defined skills. Notably, these approaches do not address rhythmic skills.

On the other hand, \glspl{mp} \cite{Ijspeert2013DynamicalMovementPrimitivesa, Li2023ProDMPUnifiedPerspectivea, Huang2019KernelizedMovementPrimitivesa, Kulak2020FourierMovementPrimitivesa} represent a mathematical framework to define both discrete and rhythmic skills. \Glspl{mp} offer further advantages such as compact representation, temporal/spatial scaling, and goal-reaching guarantees. However, the selection and sequencing of primitives and their parameters is generally guided by hand-crafted heuristics, limiting their deployment in complex tasks~\cite{Nah2025CombiningMovementPrimitivesa}. Prior work has studied selecting parameters for known primitive sequences, yet leaves primitive selection unaddressed~\cite{Wu2024OneShotRobustImitation}. A method is needed to predict primitive sequences and parameters directly from unstructured, unsegmented demonstrations.

In this paper, we address this problem by proposing \oursfirst (\ours), a framework that selects primitives in sequence and predicts their execution parameters in a supervised manner \emph{without primitive annotations}, even in the presence of rhythmic task segments.
We argue that the inherent structure of unsegmented demonstration trajectories can provide a sufficient supervisory signal for this problem.
\ours operates in a challenging ``weakly supervised'' setting: given a library of primitives (e.g., reaching, wiping, or circular motions) and singular demonstrations for each primitive, we train using raw demonstration trajectories, without explicit primitive labels. 
We empirically validate \ours across 2D and 3D domains, including real-robot deployment (see Fig.~\ref{fig:ndc_robot_rollouts}), showing \ours generalizes across primitive compositions and parameterizations. Our contributions are:
\begin{itemize}
\item A unified high-level policy that infers sequences of \emph{discrete and rhythmic} primitives, with their execution parameters, from unsegmented, unlabeled trajectories.
\item A theoretically motivated training objective that acts as a proxy for primitive-level supervision despite spatial or temporal variations.
\end{itemize}

\section{Related Work}
\label{sec:related_work}

\noindent
\textbf{Movement Primitives.} \Glspl{dmp}~\cite{Ijspeert2013DynamicalMovementPrimitivesa} were introduced as a representation for stable dynamical systems for \gls{il}. Further work extended this to probabilistic formulations~\cite{Paraschos2013ProbabilisticMovementPrimitivesa, Li2023ProDMPUnifiedPerspectivea, Kulak2020FourierMovementPrimitivesa}. Recently, \cite{Nah2025CombiningMovementPrimitivesa} showed that \glspl{dmp} can be combined using the property of contraction. Despite the useful representation properties of \glspl{mp}, they are typically sequenced and parametrized by hand. 

\textbf{Primitive Composition.} The first group of skill-based methods predicts parameters for known sequences of primitives~\cite{Wu2024OneShotRobustImitation, Mishra2023GenerativeSkillChaining, Shiarlis2018TACOLearningTask}.  MiLa~\cite{Wu2024OneShotRobustImitation} uses \glspl{dmp} in a fixed, known sequence, leaving \gls{dmp} selection as an open problem. A second line of work addresses this using \gls{rl}~\cite{Chen2024SCaRRefiningSkill, Nasiriany2022AugmentingReinforcementLearning} or known primitive labels~\cite{Jaquier2022LearningSequenceBlend, Sun2025ARCHHierarchicalHybrida}. PRIME~\cite{Gao2024PRIMEScaffoldingManipulation} extracts primitive labels autonomously, but requires environment exploration. A third category involves discovering skills and compositions~\cite{Shankar2020DiscoveringMotorPrograms, Shankar2020LearningRobotSkills, Tanneberg2021SKIDRAWSkill}. While this approach requires the least intervention, it lacks the interpretability of pre-specified skills.

\textbf{Rhythmic Task Learning.} Learning rhythmic tasks is often considered as a specialized problem in robot learning, separated from discrete movements, due to difference in the representations~\cite{Yang2022LearningPeriodicTasksa}. 
Recent works focus on enforcing periodicity in latent network representations~\cite{Li2024FLDFourierLatent} or periodic skill discovery with a similar approach~\cite{Park2026PeriodicSkillDiscovery}, primarily using \gls{rl}. ViPTL~\cite{Yang2022LearningPeriodicTasksa} learns periodic tasks from human demonstrations; however, it is limited to purely periodic tasks.

\section{Background}
\subsection{Dynamic Movement Primitives}
\label{subsec:dmp}
\noindent 
\Glspl{dmp} \cite{Ijspeert2013DynamicalMovementPrimitivesa} are dynamical systems that replicate discrete and rhythmic demonstration trajectories. The \emph{canonical system} drives the timing, $\phase(t)$:
\begin{align}
    \text{discrete: }& \tau \dotphased(t) = -\alpha_\phase \phased(t) \\
    \text{rhythmic: }& \tau \dotphaser(t) = 1,
\end{align}
where $\tau$ is a time scale and $\alpha_\phase$ is the discrete phase decay rate.
A set of $\nbasis$ basis functions $\vbasisd, \vbasisr: \realp \mapsto \realp^\nbasis$ maps the phase to basis activations, using Gaussian (discrete) and von Mises (rhythmic) functions:
\begin{align}
    \text{discrete: }& \basisd_i = \exp \big(-h_i(\phased - c_i)^2 \big), \\
    \text{rhythmic: }&\basisr_i = \exp \Bigl(h_i \bigl(\cos(\phaser - c_i)-1 \bigr) \Bigr),
\end{align}
with centers $\{c_i\}_{i=1}^{\nbasis}$ and widths $\{h_i\}_{i=1}^{\nbasis}$ taken from \cite{Nah2025CombiningMovementPrimitivesa}. These activations form the \emph{forcing function} $\vf_t: \realp^\nbasis \mapsto \real^\nstate$, which determines the transient \emph{shape} of the trajectory:
\begin{align}
    \text{discrete: }& \vf_t(\vbasisd) = \phased\frac{\mweight\vbasisd(\phased)}{\trans{\vone}\vbasisd(\phased)}, \\
    \text{rhythmic: }& \vf_t(\vbasisr) = \frac{\mweight\vbasisr(\phaser)}{\trans{\vone}\vbasisr(\phaser)},
\end{align}
where $\mweight \in \real^{\nstate \times \nbasis}$ is a basis function weight matrix.
This forcing term is added to a point-attractor to form the \emph{transformation system}:
\begin{equation}
    \tau^2 \ddot{\vy}_t = \alpha_z \bigl(\beta_z (\vg-\vy_t) - \tau\dot{\vy}_t\bigr) \ + \forcefunc_t,
    \label{eq:dmp_transformation_system}
\end{equation}
where $\vy_t, \dot{\vy}_t \in \real^\nstate$ are the state and velocity, $\alpha_z, \beta_z \in \realp$ control the attractor dynamics, and $\vg \in \real^\nstate$ is the \gls{dmp} goal. The forcing term's argument is dropped for ease of notation. A \gls{dmp} can be spatially scaled by $\ks \in \realp$ by setting $\bar{\vf}_t = \ks \eye{\nstate} \vf_t$, and $\bar{\vg} = \ks \eye{\nstate} (\vg - \vy_0) + \vy_0$, with $\eye{\nstate} \in \real^{\nstate \times \nstate}$ being identity. The scaled system state and derivatives are:
\begin{align}
    \bar{\vy}_t &= \ks \eye{\nstate} (\vy_t - \vy_0) + \vy_0, \label{eq:scaled_pos} \\
    \dot{\bar{\vy}}'_t &= \ks \eye{\nstate} \dot{\vy}_t, \label{eq:scaled_vel} \\
    \ddot{\bar{\vy}}'_t &= \ks \eye{\nstate} \ddot{\vy}_t. \label{eq:scaled_acc}
\end{align}

\subsection{Combining Movement Primitives}
\label{subsec:combining_dmps}
\noindent
Prior work proposed a method to combine \glspl{dmp} in parallel, resulting in a system that exhibits an intermediate behavior between different primitives~\cite{Nah2025CombiningMovementPrimitivesa}. Rearranging \cref{eq:dmp_transformation_system} yields:
\begin{equation}
    \tau^2 \ddot{\vy}_t + \alpha_z \tau \dot{\vy}_t + \alpha_z \beta_z \vy_t  = \alpha_z \beta_z \vg + \vf_t.
    \label{eq:dmp_rearranged}
\end{equation}
The expression on the right-hand side is defined as the primitive input $\vu$ of a \gls{dmp}:
\begin{equation}
    \vu = \alpha_z \beta_z \vg + \vf_t.
    \label{dmp_primitive_input}
\end{equation}
$\nlibrary$ \glspl{dmp} can then be combined via a convex combination:
\begin{equation}
    \vu = \sum_{\ilibrary=1}^\nlibrary \pi_\ilibrary \vu_\ilibrary,\quad \sum_{\ilibrary=1}^\nlibrary \pi_\ilibrary = 1.
    \label{eq:dmp_parallel_combination}
\end{equation}
The combined system retains the stability properties of \glspl{dmp}. In \cref{eq:dmp_parallel_combination}, the weights $\{\pi_\ilibrary\}$ naturally allow a probabilistic interpretation: a higher probability for a given \gls{dmp} proportionally increases its influence on the generated trajectory. 
This is used in \ours to predict the combined effect of all the library primitives, and incentivize those which better match the demonstrations.

\subsection{Neural Dynamic Movement Primitives}
\label{subsec:ndmp}
\noindent
Previous work \cite{Bahl2020NeuralDynamicPoliciesa} showed that a \gls{dmp} can be embedded as a differentiable neural network layer. The layers preceding the \gls{dmp} predict its defining parameters: goal $\vg$ and forcing function weight $\mweight$.
The transformation system is simulated using numerical integration with a step size of $\delta t$:
\begin{align}
    \label{eq:ndp_integration}
    \dot{\hat{\vy}}_{t+1} &= \dot{\hat{\vy}}_t + \ddot{\hat{\vy}}_t \delta t \nonumber \\
    \hat{\vy}_{t+1} &= \hat{\vy}_t + \dot{\hat{\vy}}_t \delta t.
\end{align}
By iteratively unrolling from a known \gls{ic}, the integrator produces a trajectory $\hat{Y} = [\hat{\vy}_0, \ldots, \hat{\vy}_T]$ that is differentiable with respect to the predicted $\vg$ and $\mweight$. 

This differentiability enables end-to-end training: in the original formulation, the predicted $\hat{Y}$ is compared to the expert demonstration trajectory, $Y,$ using a pointwise \gls{mse} loss. We call this formulation \baselinemse. While straightforward, this point-wise loss enforces strict temporal alignment which can be problematic as expert demonstrations may involve speed differences.

\begin{figure*}[t]
    \centering
    \includegraphics[width=0.8\linewidth]{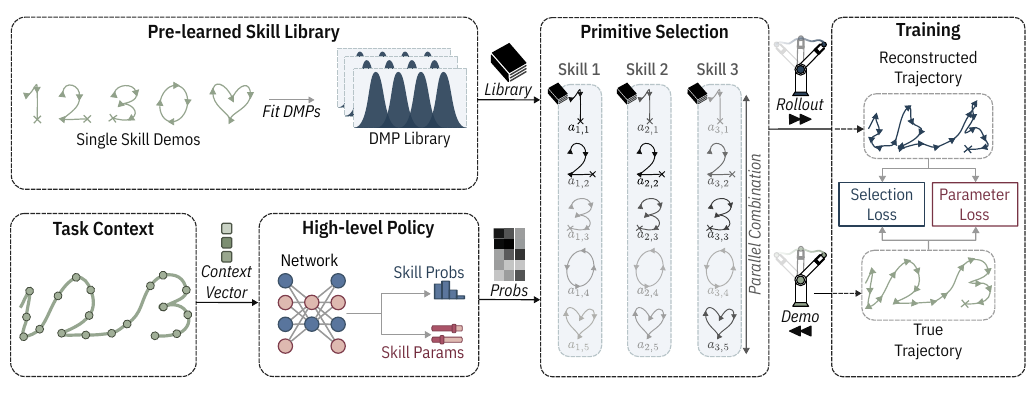}
    \caption{Overview of NDC. NDC leverages a pre-built skill library, using DMPs fitted to a single segmented skill demonstration. Conditioned on a task context, it predicts probabilities for each skill in the library across the sequence, and associated primitive parameters. The probabilities are used to combine DMPs, and rolled out into a resulting trajectory. By computing two losses with respect to the demonstration trajectory, the network is trained in a fully supervised manner.}
    \label{fig:ndc_overview}
\end{figure*}

\section{Neural Dynamic Composition}
\label{chap:method}

\noindent
We propose \ours, a hierarchical approach for learning from a mixture of discrete and rhythmic movements. We provide the problem formulation for \ours in \cref{subsec:ndc_problem_formulation}, detail our differentiable primitive selection approach in \cref{sec:diff_primitive_sel}, followed by suitable training objectives in \cref{sec:train_objective}. Then, we extend the formulation to rhythmic tasks in \cref{sec:ext_periodic} and conclude with practical guidelines in \cref{sec:practical_implementation} (See Fig.~\ref{fig:ndc_overview}).

\subsection{Problem Formulation}
\label{subsec:ndc_problem_formulation}
\noindent
Consider, in a multi-task setting, a dataset $\{(\stateall_n, \obsall_n)\}_{n=1}^N$, where each element represents a task demonstration of $T$ time-steps. Let $\stateall_n = \{\vy_t\}_{t=1}^T$, where $\vy_t \in \real^\nstate$ is the $\nstate$-dimensional robot position at time $t$, and $\obsall_n$ is a \emph{context encoding}. This encodes what the task is (i.e., which sub-tasks are used, and relevant sub-goals), and may comprise images or state representations. In this work, downsampled expert trajectories represent the task context as they encode sub-tasks as well as relevant parameters, while  retaining temporal characteristics required to represent rhythmic movements.

We assume that each task is composed of primitives $\skill$, drawn from some library $\library = \{\skill_\ilibrary\}_{\ilibrary=1}^\nlibrary$. Notably, trajectory segments may vary in length and execution speed, even for the same primitive. We do not assume access to ground-truth primitive labels for the tasks; however, we assume one example of each primitive is given to fit our library.

Let $\trajgen(\skill, \ks, \vg)$ denote the trajectory segment generated by a library primitive with a spatial scaling factor $\ks$ and a goal $\vg$ using \cref{eq:ndp_integration}. For a task composed of $\nskill$ primitives, the state trajectory can be expressed as:
\begin{equation}
    \hat{\stateall} = \trajgen(\skill^{(1)}, \ks^{(1)}, \vg^{(1)}) \oplus  \ldots \oplus \trajgen(\skill^{(\nskill)}, \ks^{(\nskill)}, \vg^{(\nskill)}),
    \label{eq:skill_concat_def}
\end{equation}
where $\oplus$ denotes the concatenation operator. Let $f_\theta$, parameterized by $\theta$, be a function that predicts the sequence of primitives and their parameters:
\begin{equation}
    \{\skill^{(\iskill)}, \ks^{(\iskill)}, \vg^{(\iskill)} \}_{\iskill=1}^\nskill = f_\theta(\obsall).
    \label{eq:obs_conditioned_acts}
\end{equation}
Our goal is to predict the correct sequence of primitives and their parameters to reproduce demonstrations.
% The optimization objective is therefore:
% \begin{equation}
%     \theta^* = \arg \min_{\theta} \sum_{i} \mathcal{D} \left(\hat{\stateall}_i, \stateall_i \right)
%     \label{eq:objective_func}
% \end{equation}
% where $\mathcal{D}$ is a loss function between two trajectories.

% \subsection{Library Formation}
% \label{subsec:sel_primitive}

\subsection{Differentiable Primitive Combination}
\label{sec:diff_primitive_sel}

\noindent
\ours operates in a supervised setting, making categorical predictions over the skill library as well as their continuous parameters. However, the ground truth skills and parameters are unobserved in the dataset which consists solely of unsegmented demonstration trajectories. To enable this, we consider the loss in \emph{trajectory} space (similar to \cref{subsec:ndmp}) with respect to the demonstration trajectory. The primary requirement for the selection process is differentiability of the predicted trajectory with respect to network predictions, with the possibility of both discrete and rhythmic movements. 

We start with  the \gls{dmp} library formulation. Given one example trajectory for each of $\nlibrary$ primitives, we fit a \gls{dmp} using linear regression to form the library, $\library = \{\skill_\ilibrary\}_{\ilibrary=1}^\nlibrary$, where $\skill_\ilibrary$ is a 3-tuple characterizing a \gls{dmp}, $(\vg_\ilibrary,\, \mweight_\ilibrary,\, r_\ilibrary)$. The first two elements are the \gls{dmp} nominal goal and forcing function weights, and $r_\ilibrary$ is a boolean indicating whether a \gls{dmp} is rhythmic.

We exploit parallel DMP combinations from \cref{subsec:combining_dmps} as a differentiable relaxation of primitive selection: one-hot probabilities recover discrete selection, while soft probabilities allow gradients to propagate through the DMP rollout. We predict probabilities over the library, combine forcing functions as in \cref{eq:dmp_parallel_combination}, and through a single integration, obtain a trajectory representing the combined effect of all library primitives weighted by their probabilities (see \cref{fig:diff_selection_example}). At each integration step, the forcing function is evaluated as:
\begin{equation}
    \vf_t = \sum_{\ilibrary=1}^\nlibrary (1 - \indicator_{r_\ilibrary})\pi_\ilibrary \phased \frac{\mweight_\ilibrary \vbasisd}{\trans{\vone}\vbasisd} + \indicator_{r_\ilibrary}\pi_\ilibrary \frac{\mweight_\ilibrary \vbasisr}{\trans{\vone} \vbasisr},
\end{equation}
where $\indicator_{r_\ilibrary}$ is $1$ for rhythmic primitives and $0$ otherwise. Next, we formulate a training objective to encourage both primitive selection and parameter regression.

\begin{figure}
    \centering
    % \vspace{4pt}
    \includegraphics[width=0.95\linewidth]{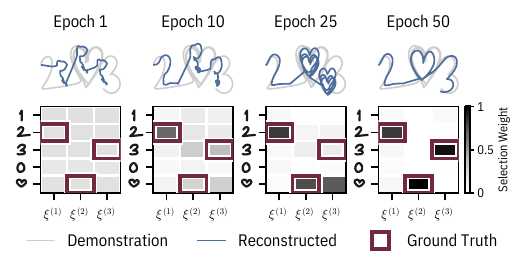}
    \caption{Evolution of combined trajectories (top) and predicted probabilities (bottom) for an example with fixed parameters.}
    \label{fig:diff_selection_example}
\end{figure}

\subsection{Training Objectives}
\label{sec:train_objective}
\noindent
In real-world demonstrations,  a skill can be performed with different parameters, such as different goal positions or speeds. \Glspl{dmp} can capture some key variations by modulating goal positions, spatial scaling and time scaling (discussed in \cref{subsec:dmp}). This imposes two requirements on \ours: first, our training process must allow primitive identification despite these variations, and second, given some choice of primitive, select the parameters to reproduce demonstrations.

\subsubsection{Primitive Selection Objective}
\label{subsec:sel_varying_goals}
Because we consider unsegmented trajectory demonstrations, we additionally have no knowledge of \emph{where} each primitive occurs: its start/end time is unknown, in addition to the goal and scaling parameters discussed above. Formally, let $\{\vg^{(\iskill)}\}_{\iskill=1}^{\nskill}$ and $\{\ks^{(\iskill)}\}_{\iskill=1}^{\nskill}$ denote these unknown goals and scaling. Executing the same sequence of \glspl{dmp} with incorrect parameters results in an entirely different position trajectory. This precludes a position based loss as in \cref{subsec:ndmp} as a stable training signal for primitive selection.

Instead, the cosine similarity of velocities/accelerations is resistant to parameter errors. Let $\bar{\vy}_t$ denote the trajectory under ``perfect'' scaling by $\ks$ (IC $\vy_0$, goal $\ks\eye{\nstate}(\vg-\vy_0)+\vy_0$). \Cref{eq:scaled_vel,eq:scaled_acc} show these quantities retain the same direction as the nominal trajectory despite scaling, so
\begin{equation}
    \operatorname{sim}(\dot{\bar{\vy}}_t, \dot{\vy}_t) = 1, \quad \operatorname{sim}(\ddot{\bar{\vy}}_t, \ddot{\vy}_t) = 1.
    \label{eq:perfect_scaling_sim}
\end{equation}
Next, we show that despite differences in \gls{ic} and goal position, these quantities converge to $1$.

\begin{proposition}[Shift and Scale Invariance]
\label{proposition:dmp_shift_scale_invariance}
For a scaled \gls{dmp} with arbitrary \gls{ic} $\vy_0' = \vy_0 + \Delta_1$ and goal $\vg' = \ks \eye{\nstate} (\vg - \vy_0) + \Delta_2$, the cosine similarity of the velocity $\dot{\vy}_t'$ and acceleration $\ddot{\vy}_t'$ relative to the nominal trajectory converges to 1:
\begin{equation}
\operatorname{sim}(\dot{\vy}_t', \dot{\vy}_t) \to 1, \quad \operatorname{sim}(\ddot{\vy}_t', \ddot{\vy}_t) \to 1 \quad \text{as} \quad t \to \infty.
\label{eq:dmp_sim_convergence}
\end{equation}
\end{proposition}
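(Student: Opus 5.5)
The plan is to exploit linearity of the transformation system \cref{eq:dmp_rearranged}. Write the perturbed (scaled, shifted) trajectory as $\vy_t' = \bar{\vy}_t + \ve_t$, where $\bar{\vy}_t$ is the ``perfectly'' scaled trajectory of \cref{eq:scaled_pos}. Both $\vy'_t$ and $\bar{\vy}_t$ are driven by the same scaled forcing term $\ks\eye{\nstate}\vf_t$ and the same canonical phase. Subtracting the two instances of \cref{eq:dmp_rearranged} therefore cancels the forcing term, so the error obeys the unforced damped oscillator
\begin{equation*}
\tau^2 \ddot{\ve}_t + \alpha_z\tau\dot{\ve}_t + \alpha_z\beta_z \ve_t = \alpha_z\beta_z \Delta_g ,
\end{equation*}
where $\Delta_g$ is the constant goal mismatch between $\vg'$ and $\bar{\vg}$, and the initial conditions are $\ve_0 = \Delta_1$ and $\dot{\ve}_0$ given by the velocity mismatch. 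Since $\alpha_z,\beta_z,\tau>0$, this is an asymptotically stable linear system. Its solution is $\ve_t = \Delta_g + (\text{homogeneous part})$, and the homogeneous part decays like $e^{-\lambda t}$ with $\lambda = \alpha_z/(2\tau)$ in the usual critically damped choice $\beta_z=\alpha_z/4$. Differentiating, $\dot{\ve}_t$ and $\ddot{\ve}_t$ are sums of terms of the form $p(t)e^{-\lambda t}$, so they converge to zero exponentially, with no dependence on $\Delta_g$.

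Next, combine this with \cref{eq:scaled_vel,eq:scaled_acc}: $\dot{\vy}'_t = \ks\dot{\vy}_t + \dot{\ve}_t$ and $\ddot{\vy}'_t = \ks\ddot{\vy}_t + \ddot{\ve}_t$. Cosine similarity is invariant to positive scaling, so $\operatorname{sim}(\dot{\vy}'_t,\dot{\vy}_t) = \operatorname{sim}(\dot{\vy}_t + \dot{\ve}_t/\ks, \dot{\vy}_t)$. A standard estimate gives
\begin{equation*}
1-\operatorname{sim}(\va+\vb,\va) \le 2\|\vb\|^2/\|\va\|^2 \quad\text{when } \|\vb\|<\|\va\| .
\end{equation*}
Equivalently, the angle between $\va+\vb$ and $\va$ is at most $\arcsin(\|\vb\|/\|\va\|)$. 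The claim thus reduces to showing that $\|\dot{\ve}_t\|/\|\dot{\vy}_t\|\to 0$, and likewise for accelerations.

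The main obstacle is this ratio, because the nominal $\dot{\vy}_t$ and $\ddot{\vy}_t$ may themselves vanish. The cases split as follows.
\begin{itemize}
\item \emph{Rhythmic primitives.} The forcing term is periodic and nonzero, so $\vy_t$ converges to a periodic limit cycle. On that cycle $\|\dot{\vy}_t\|$ is generically bounded below (away from isolated turning points), while $\dot{\ve}_t\to0$ exponentially. The ratio therefore tends to zero, and the limit should be read as holding along the trajectory away from the measure-zero set where $\dot{\vy}_t=0$.
\item \emph{Discrete primitives.} Both $\dot{\vy}_t$ and $\dot{\ve}_t$ tend to zero, so the decay rates must be compared. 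For $\dot{\ve}_t$ the rate is $\lambda$. The nominal motion is driven by the forcing term $\phased\,(\cdot)$ with $\phased = e^{-\alpha_\phase t/\tau}$. If $\alpha_\phase/\tau < \lambda$ (standard DMP parameter choices, e.g.\ $\alpha_\phase$ small relative to $\alpha_z/2$), the nominal velocity decays as $e^{-\alpha_\phase t/\tau}$ times a nonvanishing factor, which is slower than the error. Hence the ratio vanishes.
\end{itemize}
I would state the discrete case under this timescale-separation assumption, which is implicit in the usual DMP parameterization. The rhythmic case needs only exponential stability of the error system.
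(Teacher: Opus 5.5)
Your proof is correct under the hypotheses you add. Its first half matches the paper's argument: the paper shifts $\vy'_t$ by $-\Delta_2$ and cites DMP contraction to get $\dot{\vy}''_t\to\dot{\bar{\vy}}_t$ and $\ddot{\vy}''_t\to\ddot{\bar{\vy}}_t$, while you write the error system explicitly with a constant input $\alpha_z\beta_z\Delta_g$. Your form is slightly more robust, because a goal mismatch only moves the equilibrium of $\mathbf{e}_t$. This also absorbs a slip in the paper: it says $\vy''_t$ and $\bar{\vy}_t$ share the goal $\ks\eye{\nstate}(\vg-\vy_0)$, although $\bar{\vy}_t$ was defined with goal $\ks\eye{\nstate}(\vg-\vy_0)+\vy_0$.

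The real difference is the last step. The paper passes the limit through $\operatorname{sim}$ by continuity, writing $\operatorname{sim}(\dot{\vy}''_t,\dot{\vy}_t)\to\operatorname{sim}(\dot{\bar{\vy}}_t,\dot{\vy}_t)=1$. But $\operatorname{sim}$ is discontinuous at the origin, and for discrete primitives $\dot{\bar{\vy}}_t$ and $\ddot{\bar{\vy}}_t$ themselves tend to $\mathbf{0}$, so convergence of the difference says nothing about direction. Your ratio condition $\|\dot{\mathbf{e}}_t\|/\|\dot{\vy}_t\|\to 0$ is exactly what the paper's sketch is missing. Your rate comparison also shows the extra hypothesis is essential. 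With critical damping and $\alpha_\phase>\alpha_z/2$, both $\dot{\vy}_t$ and $\dot{\mathbf{e}}_t$ decay like $t\,e^{-\alpha_z t/(2\tau)}$, with leading coefficients in unrelated directions (the error's lies along $\Delta_1-\Delta_g$ when the initial velocities agree). The limiting similarity is then generally not $1$. So the paper's route buys brevity, while yours buys a correct account of when the proposition actually holds, with explicit exponential rates.

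Two points should be made explicit.

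\emph{Discrete case.} Your ``nonvanishing factor'' requires $\mweight\vbasisd(0)\neq\mathbf{0}$. The forcing term is $\phased\bigl(\mathbf{w}_0+O(\phased)\bigr)$ with $\mathbf{w}_0=\mweight\vbasisd(0)/\trans{\vone}\vbasisd(0)$. Only when $\mathbf{w}_0\neq\mathbf{0}$ does the $e^{-\alpha_\phase t/\tau}$ mode actually dominate $\dot{\vy}_t$ and $\ddot{\vy}_t$.

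\emph{Rhythmic case.} ``Away from the measure-zero set where $\dot{\vy}_t=\mathbf{0}$'' must mean a fixed margin. Near each zero of $\dot{\vy}_t$ there is a shrinking but nonempty time window where $\operatorname{sim}(\dot{\vy}'_t,\dot{\vy}_t)$ stays far from $1$; it equals $-1$ for a back-and-forth stroke when $\dot{\mathbf{e}}_t$ is aligned with the stroke. The correct statement is uniform convergence on $\{t:\|\dot{\vy}_t\|\ge\epsilon\}$ for each fixed $\epsilon>0$. Analogously, for accelerations it is uniform convergence on $\{t:\|\ddot{\vy}_t\|\ge\epsilon\}$, whose zeros sit elsewhere on the cycle.
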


\begin{proof}[Proof (Sketch)]
Let $\vy_t''$ denote the trajectory obtained by uniformly shifting $\vy_t'$ by $-\Delta_2$, i.e., with \gls{ic} $\vy_0'' = \vy_0' - \Delta_2 = \vy_0 + \Delta_1 - \Delta_2$ and goal $\vg'' = \vg' - \Delta_2 = \ks\eye{\nstate}(\vg-\vy_0)$. A uniform shift does not change velocity or acceleration, so $\dot{\vy}_t' = \dot{\vy}_t''$ and $\ddot{\vy}_t' = \ddot{\vy}_t''$.

Since $\vy_t''$ and $\bar{\vy}_t$ share the same goal $\ks\eye{\nstate}(\vg-\vy_0)$ and differ only in \gls{ic}, the contraction property of \glspl{dmp}~\cite{Nah2025CombiningMovementPrimitivesa, Perk2006MotionPrimitivesRobotica} implies $\dot{\vy}_t'' \to \dot{\bar{\vy}}_t$ and $\ddot{\vy}_t'' \to \ddot{\bar{\vy}}_t$ as $t \to \infty$. From \cref{eq:perfect_scaling_sim},
\begin{equation*}
    \operatorname{sim}(\dot{\vy}_t', \dot{\vy}_t) = \operatorname{sim}(\dot{\vy}_t'', \dot{\vy}_t) \to \operatorname{sim}(\dot{\bar{\vy}}_t, \dot{\vy}_t) = 1,
\end{equation*}
and analogously for $\ddot{\vy}_t'$.
\end{proof}
\begin{figure}
    \centering
    % \vspace{4pt}
    \includegraphics[width=0.85\linewidth]{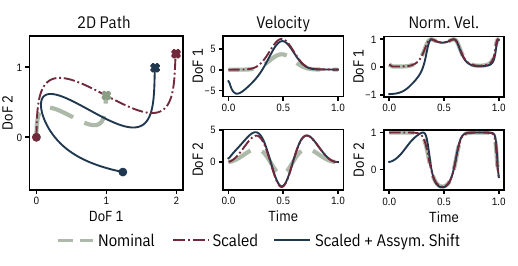}
    \label{fig:placeholder}
    \caption{DMP rollouts with varying spatial scale and boundary conditions. Raw velocities converge to scaled nominal velocities. Meanwhile, velocities normalized along the movement dimension converge to the nominal normalized velocity, even over the finite horizons used in training.}
\label{fig:diff_ks_g_ic}
\end{figure}

An illustrative example of \cref{proposition:dmp_shift_scale_invariance} is shown in \cref{fig:diff_ks_g_ic}. To make this objective resilient to speed differences in demonstration trajectories, we employ the \gls{sdtw} algorithm~\cite{Cuturi2017SoftDTWDifferentiableLoss} using the implementation from~\cite{Maghoumi2021DeepNAGDeepNonAdversarial}. Unlike \gls{mse}, which assumes one-to-one temporal correspondence, \gls{sdtw} finds an optimal alignment $\mathcal{W}$ between two time-series based on some cost function. We utilize a cosine similarity metric based on \cref{eq:dmp_sim_convergence} as the ground cost function between each pair of points in the predicted and reference trajectories:
\begin{multline}
    c(\dot{\hat{\vy}}_i, \dot{\vy}_j, \ddot{\hat{\vy}}_i, \ddot{\vy}_j) = \lambda_{\text{vel}} \bigl(1 - \operatorname{sim}(\dot{\hat{\vy}}_i, \dot{\vy}_j)\bigr) \\ + \lambda_{\text{acc}} \bigl( 1 - \operatorname{sim}(\ddot{\hat{\vy}}_i, \ddot{\vy}_j) \bigr),
\end{multline}
where $\lambda_{\text{vel}}, \lambda_{\text{acc}} \in \realp$ are weighting constants. We use $\lambda_{\text{vel}} = 0.65,\ \lambda_{\text{acc}} = 0.35$. We denote this loss $\lsel$:
\begin{equation}
    \lsel(\hat{Y}, Y) = \min_{\mathcal{W}} \sum_{i,j\in \mathcal{W}}  c(\dot{\hat{\vy}}_i, \dot{\vy}_j, \ddot{\hat{\vy}}_i, \ddot{\vy}_j).
    \label{equ:loss-sel}
\end{equation}

\noindent
We add an entropy bonus alongside \cref{equ:loss-sel} to discourage over-committing to a particular primitive early during training. The entropy for the distribution of skill $\iskill$ is:
\begin{equation*}
\mathcal{H}(\pi^{(\iskill)}) = - \sum_{\ilibrary=1}^\nlibrary \pi_\ilibrary^{(\iskill)} \log \pi_\ilibrary^{(\iskill)}.
\end{equation*}
Then, the batch-wide entropy loss $\lent$ is:
\begin{equation}
\lent = -\frac{1}{\nskill N} \sum_{n=1}^N \sum_{\iskill=1}^\nskill \mathcal{H}(\pi^{(\iskill)})^{(n)},
\label{equ:loss-ent}
\end{equation}
where $N$ is the number of samples.

\subsubsection{Parameter Regression Objective}
\label{sec:parameter_objective}
\noindent
\glsunset{dtw}
We use the position difference \gls{sdtw} objective, $\lpos$ for parameter  regression to learn to predict $\vg$ and spatial scale $\ks$ values: 
\begin{equation}
    \lpos(\hat{Y}, Y) = \min \sum_{i,j\in \mathcal{W}} \|\hat{\vy}_i - \vy_j\|_2.
    \label{equ:loss-pos}    
\end{equation}
This objective uses the Euclidean distance on the warping path, and is minimized for a particular selection with appropriate $\vg$ and $\ks$ values. 

The loss in \cref{equ:loss-pos} weighs all points in the trajectory equally; meanwhile, sub-goals are often of higher importance in discrete robot tasks.
Therefore, we employ a terminal objective by penalizing the error at each sub-goal. For our \emph{generated trajectory}, we know the points at the end of each primitive. The \gls{dtw} alignment estimates corresponding points of the reference, which serve as bootstrapped sub-goal targets. We define the terminal position loss as:
\begin{equation}
\lterm = \sum_{\iskill=1}^{\nskill} \left\lVert \hat{\vy}_{T_\iskill} - \vy_{\mathcal{W}(T_\iskill)} \right\rVert,
\end{equation}
where $T_\iskill$ is the terminal step of the $\iskill$-th generated primitive, and $\mathcal{W}(T_\iskill)$ is the corresponding index in the reference. The combined parameter loss, where $\lambda_{\text{pos}}, \lambda_{\text{term}} \in \realp$, is:
\begin{equation}
    \label{eq:loss_param}
    \lpar = \lambda_{\text{pos}} \lpos + \lambda_{\text{term}} \lterm.
\end{equation}

\subsection{Rhythmic Movements}
\label{sec:ext_periodic}
\noindent
We extend \ours to rhythmic movements with minor modifications. Like prior methods~\cite{Yang2022LearningPeriodicTasksa}, we assume repetition information is available; for trajectory data, it can be extracted via \gls{rqa}~\cite{Eckmann1987RecurrencePlotsDynamicala}. To integrate rhythmic primitives into our parallel execution framework, we set the temporal scaling $\tau$ to match the detected number of repetitions, and combine in parallel as in \cref{subsec:combining_dmps}.

The terminal loss $\lterm$ is redefined, as rhythmic movements are characterized by an anchor rather than a goal position; thus, we penalize the distance between the predicted mean position and the mean position of the aligned reference:
\begin{equation}
    \lterm = \left\lVert \left( \frac{1}{T} \sum_{t=1}^T \hat{\vy}_t \right) - \left( \frac{1}{\lvert \mathcal{W}_r \rvert} \sum_{{j \in \mathcal{W}_r}} \vy_j \right) \right\rVert,
\end{equation}
where $\mathcal{W}_r$ is the set of indices in the reference trajectory \gls{dtw}-aligned to the rhythmic execution.

\subsection{Training Curriculum}
\label{sec:practical_implementation}

\noindent
Simultaneously minimizing the selection ($\lsel$) and parameter loss ($\lpar$) complicates optimization. This is because the correct parameters are \emph{specific} to each \gls{dmp}. If the parameter network is optimized using an incorrect primitive, the resulting gradients are uninformative for the true task.  Similarly, inaccurate parameter predictions degrade the invariance properties of $\lsel$, preventing it from providing a stable gradient signal for primitive selection.
\ours addresses this by splitting the training into a three-part curriculum. 

\textbf{Stage 1} trains only the selection head using the warp-invariant objective $\lsel$ (\cref{equ:loss-sel}) with nominal values $ \vg_\ilibrary$, $\ks = 1$. Fixing the parameters ensures that $\lsel$ receives a stable gradient signal. The entropy bonus $\lent$ (\cref{equ:loss-ent}), weighted by $\entweight$, discourages premature commitment to a single primitive; $\entweight$ is annealed over time to allow confident predictions as training progresses.

\textbf{Stage 2} freezes the selection head and optimizes only the parameter heads $ \vg$, $\ks$, ensuring gradients are computed for the correct primitive. The terminal loss is disabled ($\lambda_{\text{term}} = 0$) until parameter predictions are approximately correct.

\textbf{Stage 3} Fine-tunes parameter heads including $\lterm$, which refines sub-goal accuracy.

\subsection{Network Architecture}
\noindent
The task context is the downsampled expert trajectory, augmented with step-wise displacements. We use a fully-convolutional temporal encoder following the literature~\cite{Dwibedi2020CountingOutTime}, with dilated residual blocks sized so the receptive field covers the full input sequence. The encoder output is reduced to $\nskill$ positions, one per predicted primitive, from which separate convolutional heads predict selection probabilities, sub-goals, and spatial scaling. The selection head uses a Softmax activation during training, switching to hard selection after training. An exponential activation is applied to the spatial scaling head's output to ensure $\ks > 0$.

\section{Experiments and Results}
\label{sec:experiments}
\noindent
We evaluate \ours across two distinct domains: a 2D sequential handwriting task and a 3D robot manipulation environment. Through these experiments, we aim to evaluate its ability to generalize across various primitive compositions from unstructured data (\textbf{RQ1}) and perform an ablation study to identify which design choices are most critical to its performance (\textbf{RQ2}).

\begin{table}[t]
\centering
\vspace{8pt}
\caption{{Average DTW distances across varying numbers of movement repetitions, on the test set.}}
\label{tab:dtw_repetition_results}
\begin{tabular}{@{}lccc@{}}
\toprule
& \multicolumn{3}{c}{\textbf{DTW ($\downarrow$)}} \\
\cmidrule(l){2-4}
& \textbf{reps $= 1$} & \textbf{reps $= 2$} & \textbf{reps $= 3$} \\
\midrule
\textbf{BC} & $2.800 \std{6.667}$ & $0.664 \std{0.939}$ & $1.404 \std{0.989}$ \\
\textbf{\baselinemse} & $0.048 \std{0.039}$ & $0.124 \std{0.041}$ & $0.122 \std{0.040}$ \\
\textbf{\baselineplus} & $0.038 \std{0.025}$ & $0.077 \std{0.011}$ & $0.076 \std{0.019}$ \\
\textbf{\ours (ours)} & $\mathbf{0.026} \std{0.016}$ & $\mathbf{0.037} \std{0.014}$ & $\mathbf{0.038} \std{0.008}$ \\
\bottomrule
\end{tabular}
\end{table}

\begin{table*}
\footnotesize
\centering
\vspace{8pt}
\caption{Task success rates across various sub-task sequences in simulation.}
\label{tab:task_success}
\begin{tabular}{@{}llccccc@{}}
\toprule
& & \multicolumn{5}{c}{\textbf{Success Rate (\%) ($\uparrow$)}} \\
\cmidrule(l){3-7}
\textbf{Task} & \textbf{$K$} & \textbf{BC} & \textbf{\baselinemse} & \textbf{\baselineplus} & \textbf{\ours P2 (ours w/o term.)} & \textbf{\ours} (ours) \\
\midrule
\wipe$\rightarrow$\press$\rightarrow$\reach  & 3 & $23.0\std{23.0}$ & $8.0\std{2.0}$ & $57.0\std{21.0}$ & $\mathbf{87.0}\std{21.0}$ & $\mathbf{87.0}\std{21.0}$ \\
\retreat$\rightarrow$\slideLeft$\rightarrow$\reach & 3 & $53.0\std{23.0}$ & $47.0\std{17.0}$ & $57.0\std{22.0}$ & $67.0\std{16.0}$ & $\mathbf{80.0}\std{17.0}$ \\
\circular$\rightarrow$\press$\rightarrow$\slideLeft & 3 & $10.0\std{10.0}$ & $60.0\std{29.0}$ & $76.0\std{16.0}$ & $\mathbf{95.0}\std{10.0}$ & $91.0\std{14.0}$ \\
\slideLeft$\rightarrow$\reach$\rightarrow$\wipe & 3 & $47.0\std{16.0}$ & $54.0\std{16.0}$ & $94.0\std{16.0}$ & $\mathbf{98.0}\std{1.0}$ & $\mathbf{98.0}\std{1.0}$ \\
\reach$\rightarrow$\wipe$\rightarrow$\retreat & 3 & $16.0\std{17.0}$ & $28.0\std{13.0}$ & $91.0\std{6.0}$ & $\mathbf{94.0}\std{3.0}$ & $93.0\std{4.0}$ \\
\circular$\rightarrow$\slideLeft$\rightarrow$\slideRight & 3 & $27.0\std{23.0}$ & $37.0\std{23.0}$ & $\mathbf{97.0}\std{7.0}$ & $\mathbf{97.0}\std{8.0}$ & $\mathbf{97.0}\std{7.0}$ \\
\wipe$\rightarrow$\slideLeft$\rightarrow$\retreat & 3 & $61.0\std{26.0}$ & $62.0\std{17.0}$ & $\mathbf{96.0}\std{2.0}$ & $\mathbf{96.0}\std{3.0}$ & $90.0\std{18.0}$ \\
\slideRight$\rightarrow$\retreat$\rightarrow$\circular & 3 & $0.0\std{0.0}$ & $57.0\std{16.0}$ & $\mathbf{90.0}\std{16.0}$ & $89.0\std{17.0}$ & $89.0\std{17.0}$ \\
\reach$\rightarrow$\wipe$\rightarrow$\press$\rightarrow$\slideLeft & 4 & $30.0\std{9.0}$ & $37.0\std{2.0}$ & $82.0\std{19.0}$ & $\mathbf{96.0}\std{4.0}$ & $\mathbf{96.0}\std{5.0}$ \\
\slideLeft$\rightarrow$\press$\rightarrow$\slideRight$\rightarrow$\retreat & 4 & $40.0\std{14.0}$ & $63.0\std{11.0}$ & $93.0\std{14.0}$ & $\mathbf{100.0}\std{0.0}$ & $\mathbf{100.0}\std{0.0}$ \\
\midrule
\textbf{Overall} & --- & $31.0\std{25.0}$ & $45.0\std{23.0}$ & $83.0\std{21.0}$ & $\mathbf{92.0}\std{13.0}$ & $\mathbf{92.0}\std{14.0}$ \\
\bottomrule
\end{tabular}
\end{table*}

\subsection{Baselines}
\glsunset{dmp}
\noindent
The primary baseline we consider is \textbf{\baselinemse}, based on \cref{subsec:ndmp}, which directly predicts \gls{dmp} weights and goal positions. While this method was designed for single, discrete movements, we extend it to predict $\nskill$ primitives. To ensure a rigorous comparison, we also strengthen \baselinemse by replacing the standard \gls{mse} loss with an \gls{sdtw} loss from \cref{eq:loss_param} and augment the architecture with two heads: one to predict discrete forcing function weights, and one for rhythmic weights, which are combined using \cref{eq:dmp_parallel_combination}. A third head predicts the proportion of each forcing function. We term this variant \baselineplus. As a model-free alternative, we implement \textbf{\gls{bc}} with action-chunking, which maps states directly to velocities. The state includes a phase variable to allow for rhythmic movements. For robot experiments, we also consider \textbf{\ours-P2}, which omits the third phase of training for \ours discussed in \cref{sec:practical_implementation}. All methods are conditioned on the \emph{same task context}.

\subsection{Toy Problem: Sequential Digit-Writing}
\noindent
The first experiment is a digit-writing task inspired by \cite{Li2023ProDMPUnifiedPerspectivea, Bahl2020NeuralDynamicPoliciesa}. While these works considered writing a single discrete digit, we extend this to sequential combinations of pre-defined shapes, including rhythmic movements. We expect a natural level of variation in temporal and spatial scaling, along with the shapes themselves. Specifically, we aim to evaluate whether \ours can generalize across primitive compositions (\textbf{RQ1}) in an unstructured 2D domain.

\subsubsection{Data Collection and Pre-processing}
 The library consists of five shapes: digits 1, 2, 3 (discrete), and a 0 and heart (rhythmic). Sequences use $\nskill=3$ sub-tasks, with 1-3 repetitions of at most one rhythmic movement. Examples are illustrated in \cref{fig:handwriting_plots}. Individual shapes were recorded independently and normalized to 1s at 200Hz to form the library; full sequences were normalized to 3s. We collected 488 demonstrations (approximately 6 demos per sequence), collecting coarse primitive-order annotations used only for evaluation. The task context is the downsampled expert trajectory, arc-length resampled to 150 points. \Gls{rqa} is used to extract repetition counts automatically.

\begin{figure}[t]
    \centering
    \includegraphics[width=0.9\linewidth]{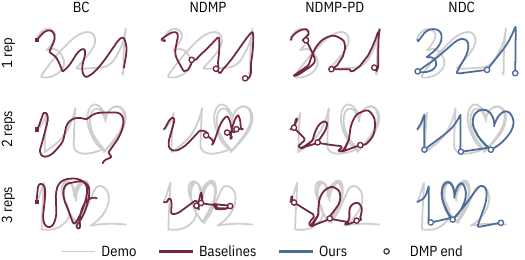}
    \caption{{Rollouts of the digit-writing experiment.}}
    \label{fig:handwriting_plots}
\end{figure}

\subsubsection{Results} 
We group trajectories by number of repetitions and compute the mean and standard deviation of \gls{dtw} distance, normalized by sequence length, between reference and generated trajectories (\cref{tab:dtw_repetition_results}). \Gls{bc} performs worst for long-horizon tasks, as it struggles to learn a policy for different compositions. \baselinemse also performs poorly, especially as repetitions increase. For discrete cases, it moves in the general correct direction, but loses detail as it is susceptible to time-warps. \baselineplus and \ours achieve good performance, with \ours performing best across different numbers of repetitions. \ours also shows the lowest degradation with increasing repetitions. This is consistent with the qualitative examples in \cref{fig:handwriting_plots}, which show \ours performing consistently across different levels of periodicity. Further, \ours is the only method that can accurately segment the task; baselines, even when performant, do not predict meaningful primitives.

\subsection{Robot Simulation Experiments}
\noindent
We consider manipulation tasks in 3D task-space, with randomization in sub-task goals and scaling (\textbf{RQ1}). 
We use the Mecademic Meca 500 in both a Mujoco \cite{Todorov2012MuJoCoPhysicsEnginea} simulation and real-world experiments (Fig.~\ref{fig:sim_real_setup}). 
This setting considers parameters and compositions in order to complete multi-stage tasks in a limited data regime for each task.

\begin{figure}[t!]
    \centering
    \includegraphics[width=0.8\linewidth]{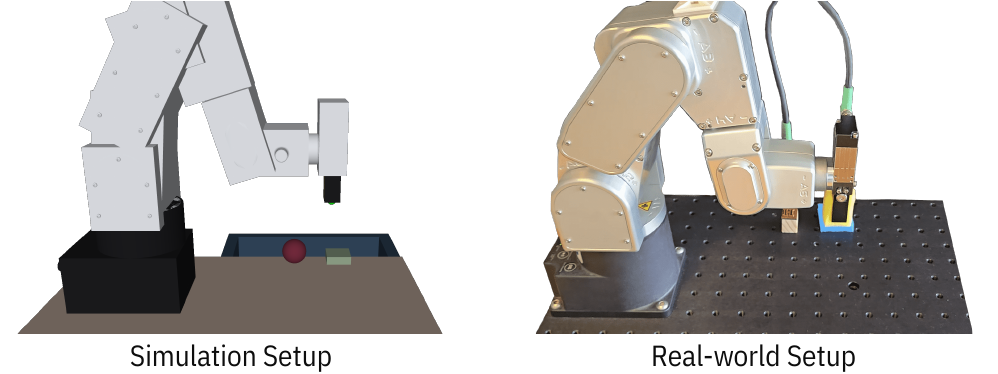}
    \caption{{Simulation and real-world robot setups shown side-by-side.}}
    \label{fig:sim_real_setup}
\end{figure}

\subsubsection{Sub-Task Description and Success}
We consider the following $7$ sub-tasks: \textsf{Reach, Retreat, Press, Slide Left/Right}, and rhythmic \textsf{Circle} (cleaning motion) and \textsf{Wipe} (back and forth). Sub-task success is defined by physical task requirements. \slideLeft~and \slideRight~must place a ball in a designated bin. For \press, a button must be triggered. For \wipe~and \circular, we define ``dirty'' areas divided into cells that clear with contact. \reach~and \retreat~transition between regions and have no success criterion.

We create a tabletop environment with several manipulation tasks composed of $\nskill = 3$ or $4$ sub-tasks described above. 
The environment is randomized by the $xy$-position of objects and scale of cleaning areas. We create 50 scenes for each task, using 40 for training and 10 for evaluation.

\subsubsection{Data Collection}
To collect demonstration data, the target sequence of sub-tasks is executed using heuristic-based trajectory controllers. Positions and scales are adjusted based on the environment's objects. To handle transitions between sequential sub-tasks, we use interpolation to move from the terminal state of one sub-task to the start of the next. We only retain demonstrations that achieve a task success rate of $>95\%$. We further randomize the execution speed between $0.3\times$ and $1.5\times$ the nominal speed. 

\subsubsection{Overall success rate}

\Cref{tab:task_success} summarizes task success rates across different compositions. \Gls{bc} performs worst, followed by \baselinemse which is susceptible to time-warps and periodicity in demonstrations. With a warp-invariant objective and parallel combination to allow periodicity, \baselineplus achieves significantly better results. However, it shows a large variance, falling as low as $57.0\%$. \ours performs well even without the terminal objective, but incorporating it improves the worst-case success from $67.0\%$ to $80.0\%$.

\subsubsection{Sub-task success rate}
The sub-task success rates (\Cref{tab:primitive_success}) further illustrate these trends. \Gls{bc} again performs worst in the demonstration-limited setting; \baselinemse is much better for the \textsf{Slide} tasks, but performs poorly for rhythmic tasks. Both this and \baselineplus also perform very poorly on the \press~sub-task, which demands high-precision. Meanwhile, \ours performs well both with and without the terminal objective (NDC-P2); however this objective improves performance noticeably in the \press~sub-task.

\begin{table}[t]
\footnotesize
\centering
\setlength{\tabcolsep}{2pt}
\vspace{8pt}
\caption{Per-primitive success rates.}
\label{tab:primitive_success}
\begin{tabular}{lrrrrr}
\toprule
& \multicolumn{5}{c}{\textbf{Success Rate (\%) ($\uparrow$)}} \\
\cmidrule(l){2-6}
\textbf{Prim.} & \textbf{BC} & \textbf{\baselinemse} & \textbf{\baselineplus} & \textbf{\ours P2} & \textbf{\ours} \\
\midrule
\slideLeft & $64.0\std{41.0}$ & $80.0\std{28.0}$ & $96.0\std{8.0}$ & $\mathbf{100.0}\std{0.0}$ & $99.0\std{4.0}$ \\
\wipe & $18.0\std{16.0}$ & $21.0\std{10.0}$ & $90.0\std{8.0}$ & $\mathbf{93.0}\std{3.0}$ & $\mathbf{93.0}\std{3.0}$ \\
\press & $12.0\std{8.0}$ & $12.0\std{27.0}$ & $48.0\std{34.0}$ & $86.0\std{17.0}$ & $\mathbf{92.0}\std{11.0}$ \\
\circular & $20.0\std{20.0}$ & $38.0\std{13.0}$ & $\mathbf{93.0}\std{4.0}$ & $88.0\std{9.0}$ & $88.0\std{9.0}$ \\
\slideRight & $10.0\std{17.0}$ & $80.0\std{26.0}$ & $97.0\std{6.0}$ & $\mathbf{100.0}\std{0.0}$ & $\mathbf{100.0}\std{0.0}$ \\
\bottomrule
\end{tabular}
\end{table}

\subsubsection{Goal-prediction accuracy}
We hypothesize that the improvement from \Cref{tab:task_success} is driven by more accurate sub-goal prediction for discrete tasks. To verify this, \cref{tab:goal-error-discrete} reports goal prediction error per method. NDC with $\lterm$ achieves the lowest error, confirming the hypothesis. An example is shown in \cref{fig:goal_preds}.
Interestingly, \baselineplus achieves high success rates despite large goal prediction errors. This suggests it completes sub-tasks through the transient forcing function rather than explicit goal-reaching, a fundamentally different execution strategy that sacrifices interpretability for task completion. Note that \gls{bc} is excluded from this comparison as it does not predict goal positions explicitly.

\begin{table}[t]
\centering
\footnotesize
\setlength{\tabcolsep}{4pt}
\vspace{8pt}
\caption{Goal-prediction error.}
\label{tab:goal-error-discrete}
\begin{tabular}{lcccc}
\toprule
& \multicolumn{4}{c}{\textbf{Goal-Prediction Error (mm) ($\downarrow$)}} \\
\cmidrule(l){2-5}
\textbf{Task} & \textbf{\baselinemse} & \textbf{\baselineplus} & \textbf{\ours P2} & \textbf{\ours} \\
\midrule
\slideLeft & $77.2\std{26.3}$ & $137.9\std{42.4}$ & $12.3\std{8.5}$ & $\mathbf{8.7}\std{5.2}$ \\
\press & $66.0\std{26.0}$ & $67.2\std{28.4}$ & $14.6\std{7.8}$ & $\mathbf{10.5}\std{6.8}$ \\
\slideRight & $114.3\std{20.4}$ & $280.4\std{21.0}$ & $\mathbf{11.6}\std{6.2}$ & $11.9\std{7.3}$ \\
\midrule
\textbf{All} & $80.2\std{30.5}$ & $138.1\std{85.3}$ & $13.1\std{7.8}$ & $\mathbf{10.1}\std{6.3}$ \\
\bottomrule
\end{tabular}
\end{table}

\begin{figure}[t]
    \centering
    \includegraphics[width=0.9\linewidth]{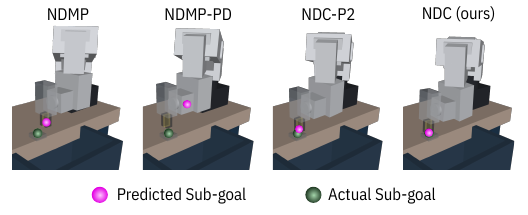}
    \vspace{-1em}
    \caption{Example of predicted v.s. actual goals.}
    \label{fig:goal_preds}    
\end{figure}

\begin{figure}[t]
    \centering
    \includegraphics[width=0.9\linewidth]{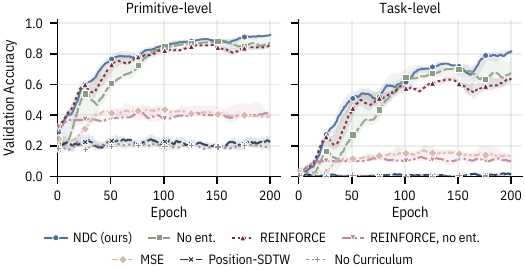}
    \caption{{Ablations of \ours.}}
    \label{fig:ablation_convergence}
    \vspace{-1em}
\end{figure}

\subsection{Ablation Study: Primitive Selection}
\noindent 
We perform an ablation study to analyze the effects of our design choices on \emph{task classification accuracy} (\textbf{RQ2}). 

We compare \ours against the following variations: \textbf{MSE} replaces the \gls{sdtw} similarity metric from \cref{sec:train_objective} with a point-wise metric. \textbf{No Ent.} removes entropy regularization in \cref{equ:loss-ent}. \textbf{REINFORCE} uses a hard-selection policy trained using REINFORCE with baseline~\cite{Sutton1999PolicyGradientMethods}, which we train both with and without entropy regularization. \textbf{Position-SDTW} trains without the selection objective, and \textbf{No Curriculum} trains using the combined objective from the start without the curriculum from \cref{sec:practical_implementation}.

We measure performance using two metrics: \textit{primitive-level accuracy} (percentage of individual primitives correctly identified) and \textit{task-level accuracy} (binary metric requiring all $\nskill$ primitives to be correct) on the handwriting task. Results are shown in \cref{fig:ablation_convergence}. The results indicate that the training curriculum and cosine-similarity objective are the most important levers, followed by warp invariance. The entropy term primarily speeds up convergence in \ours. Meanwhile, for REINFORCE, entropy regularization serves as a necessary exploration mechanism, without which accuracy falls significantly. With entropy, REINFORCE performs well, but falls short of both differentiable variants.

\begin{table*}[t]
\centering
\footnotesize
\vspace{8pt}
\caption{Real World Experiment Results.}
\label{tab:real_world_results}
\begin{tabular}{@{}lcccc@{}}
\toprule
& \multicolumn{4}{c}{\textbf{Success Rate (\%) ($\uparrow$)}} \\
\cmidrule(l){2-5}
\textbf{Task} & \textbf{NDMP} & \textbf{NDMP-PD} & \textbf{NDC P2 (ours w/o term.)}  & \textbf{NDC (ours)} \\
\midrule
\retreat $\rightarrow$ \slideLeft $\rightarrow$  \press & $0.00\std{0.00}$ & $50.00\std{52.70}$ & $90.00\std{31.62}$ & $\mathbf{100.00}\std{0.00}$ \\
\circular $\rightarrow$ \slideLeft $\rightarrow$ \slideRight  & $6.67\std{25.82}$ & $53.33\std{51.64}$ & $66.67\std{48.80}$ & $\mathbf{73.33}\std{45.77}$ \\
\reach $\rightarrow$ \wipe $\rightarrow$ \press $\rightarrow$ \slideLeft & $13.33\std{35.19}$ & $60.00\std{50.71}$ & $86.67\std{35.19}$ & $\mathbf{93.33}\std{25.82}$ \\
\midrule
\textbf{Overall} & $6.67\std{26.67}$ & $54.44\std{50.38}$ & $81.11\std{40.51}$ & $\mathbf{88.89}\std{33.49}$ \\
\bottomrule
\end{tabular}
\end{table*}

\subsection{Real Robot Experiments}
\noindent 
Because this work considers a task-space trajectory as input, the sim-to-real gap in our setting is in robot policy execution. We replicate the test settings from the simulation in the real world for 3 tasks. The resulting physical rollouts are shown in \cref{fig:ndc_robot_rollouts} with more in the supplemental video. We run \baselinemse, \baselineplus and \ours both with (full \ours) and without terminal penalty (\ours-P2) on the same test configurations. 

The results are summarized in \cref{tab:real_world_results}. 
Compared to the simulation, \baselinemse and \baselineplus show a significant drop in success rate, likely due to their sensitivity to the precise trajectory execution in real-world conditions.

\section{Conclusion}
\label{sec:conclusion}
\noindent
This work proposed \ours, which uses \glspl{dmp} to predict \emph{which discrete or rhythmic} primitive to select, as well as its parameters, using suitable optimization objectives that do not require explicit primitive labels. We evaluated \ours on a 2D handwriting task and a tabletop robot simulation task; compared to \baselinemse, \baselineplus, and \gls{bc}, our method maintained the highest performance across most tasks, with the most notable gains under rhythmic repetitions, with ablations showing each component contributes meaningfully. Limitations include a fixed number of primitives per task, assumption of a suitable pre-defined library, and the exclusion of gripper actions. Future work will relax the fixed-primitive-count assumption, potentially via seq2seq approaches, incorporate high-dimensional context modalities such as video, and explore skill discovery to remove the need for a pre-defined primitive library entirely.

\bibliographystyle{IEEEtran}
\bibliography{ref}

\end{document}